\documentclass[twoside,11pt]{article}

\usepackage[abbrvbib, preprint]{jmlr2e}

\usepackage[utf8]{inputenc} 
\usepackage[T1]{fontenc}    
\usepackage{microtype}      
\usepackage{hyperref}       
\usepackage{booktabs}       
\usepackage{amsfonts}       
\usepackage{nicefrac}       
\usepackage{xcolor}         
\usepackage{url}            
\usepackage{subcaption}     
\usepackage{enumitem}  

\usepackage{placeins}  

\usepackage[most]{tcolorbox}

\newtcolorbox{greybox}{
  colback=black!5,      
  colframe=black!5,     
  arc=2pt,              
  boxrule=0pt,
  left=10pt, right=10pt, top=8pt, bottom=8pt,
  enhanced
}

\makeatletter
\let\proof\@undefined
\let\endproof\@undefined
\makeatother

\usepackage{amsmath,amsfonts,bm}

\usepackage{amsthm}
\usepackage{xcolor}
\usepackage[framemethod=TikZ]{mdframed}

\newmdtheoremenv[
    backgroundcolor=gray!10,
    linecolor=gray!10,
    linewidth=0pt,
    innertopmargin=8pt,
    innerbottommargin=8pt,
    innerleftmargin=10pt,
    innerrightmargin=10pt,
    roundcorner=2pt,
]{prop}{Proposition}

\def\eqref#1{equation~\ref{#1}}

\def\1{\bm{1}}

\DeclareMathAlphabet{\mathsfit}{\encodingdefault}{\sfdefault}{m}{sl}
\SetMathAlphabet{\mathsfit}{bold}{\encodingdefault}{\sfdefault}{bx}{n}

\usepackage{algorithmic}
\usepackage{algorithm}
\usepackage{graphicx}
\usepackage{booktabs}
\usepackage{enumitem}
\usepackage{wrapfig}
\usepackage{caption}
\usepackage{float}
\usepackage[capitalise,noabbrev]{cleveref}

\crefformat{equation}{(#2#1#3)}
\crefrangeformat{equation}{(#3#1#4) to (#5#2#6)}
\crefmultiformat{equation}{(#2#1#3)}{ and (#2#1#3)}{, (#2#1#3)}{ and (#2#1#3)}

\ShortHeadings{Annealing Flows with Surrogate Likelihood Estimators}{Daniel Peñaherrera, Rishal Aggarwal and David Ryan Koes}
\firstpageno{1}

\begin{document}

\title{Scalable Inference-Time Annealing with Surrogate Likelihood Estimators}

\author{\name Daniel Peñaherrera \email dap181@pitt.edu \\
       \name Rishal Aggarwal \email rishal.aggarwal@pitt.edu \\
       \name David Ryan Koes \email dkoes@pitt.edu \\
       \addr CMU-Pitt PhD Program in Computational Biology\\
        Dept.\ of Computational \& Systems Biology, University of Pittsburgh, Pittsburgh, PA 15260, USA}

\maketitle

\begin{abstract}%
    A long standing challenge in computational chemistry and biophysics is efficiently sampling the Boltzmann distribution of molecules. Advances in generative modeling have been proposed to address the limitations of conventional sampling techniques by eliminating the computational cost of simulation. A promising direction is iteratively finetuning diffusion models along a temperature ladder whereby training data is generated via importance sampling during inference-time annealing. Unfortunately, these methods require computing a divergence over the score field to estimate importance weights, rendering them intractable for larger systems. Here we present scalable inference-time annealing (SITA), which retrains flow-based models to generate samples at progressively lower temperatures using an energy-based model to facilitate fast surrogate likelihoods. We demonstrate state-of-the-art performance on both Alanine Dipeptide and Alanine Tripeptide while avoiding costly divergence terms. Our code is available at: \url{https://github.com/countrsignal/sita.git}
\end{abstract}

\begin{keywords}
  boltzmann sampling, bootstrap generative models, temperature annealing, energy-based models, flow-matching
\end{keywords}

\section{Introduction}
\label{sec:intro}

Sampling the equilibrium ensemble of molecular configurations is a foundational task in statistical physics (\citet{noe_bg, H_nin_2022, Faulkner_2024}), as it provides access to thermodynamic observables like free energies and binding affinities, yet remains notoriously difficult for all but the simplest systems. Molecular ensembles are defined by the Boltzmann distribution whose probability density function is defined by $\pi(x) = Z^{-1}\exp(-\frac{E(x)}{k_{B}T})$ where $E(\cdot)$ is the energy function of the system $x$, $k_{B}$ is the Boltzmann constant, $T$ is the temperature, and $Z$ is the normalizing constant known as the partition function. The difficulty in sampling the Boltzmann distribution arises from the highly non-convex and rugged nature of the energy function. Traditional techniques such as Markov Chain Monte Carlo (MCMC) and molecular dynamics (MD) simulations often become trapped in energy minima, requiring long simulations with femtosecond timesteps that yield highly correlated samples. Furthermore, even minor modifications to the molecular system of interest necessitate entirely new simulations, limiting their suitability for high-throughput analysis.

Deep-learning generative models are an established class of samplers \citep{hyvarinen05a, sohldickstein2015, song2020score, ho2020denoising, lipman2022flow} that hold the promise for fast, amortized sampling of the Boltzmann distribution. However, the data-driven nature of these methods poses a severe limitation to realizing this promise as the current training paradigm is circular: it requires equilibrium ensembles for training data, yet generating such ensembles is precisely the intractable problem at hand. Recently, a new paradigm has emerged that facilitates a bootstrapping approach whereby generative models are iteratively retrained on their own outputs from a previous iteration of optimization. Existing generative bootstrapping methods can be broadly grouped into two families: those based on diffusion samplers and those based on importance sampling.

Diffusion-based samplers are designed to approximately sample a target distribution by defining probability paths to either optimize over \citep{ctmd} or derive regression targets for the optimal drift \citep{idem, asbs, bridge_matching}. In the absence of any initial dataset, these methods require incorporation of the unnormalized likelihoods under the target distribution into their loss functions. While compelling in principle, diffusion-based samplers tend to suffer from mode collapse when scaled to molecular systems. Although recent work has begun to address this \citep{bridge_matching}, the resulting methods still do not recover the equilibrium ensemble faithfully, trading mode collapse for an oversmoothed energy landscape.

Importance-sampling-based bootstrapping emulates Annealed Importance Sampling \citet{Jarzynski_1997, neal} and Sequential Monte Carlo \citet{smc} by defining a sequence of distributions that the model successively approximates. At each step, samples from the current model serve as proposals, reweighted by importance weights to target the next distribution. The sequence is typically constructed via geometric annealing \citep{fab, cmt_bless, nets} or temperature annealing \citep{temp_bg, akhound2025progressive}. Under temperature annealing, one trains a generative model on high-temperature simulation data and iteratively fine-tunes on its own samples as the temperature is lowered. Two benefits follow: (i) high-temperature simulation enhances exploration and mode coverage, and (ii) recursive bootstrapping yields data-efficient recovery of the target distribution.

One such annealing method is PITA \citet{akhound2025progressive}, which combines diffusion models with inference-time modifications to the reverse-time generative process, enabling generation of samples that are approximately distributed by the lower temperature Boltzmann distribution. PITA relies on self-normalized importance sampling (SNIS) at each annealing step based on the Feynman-Kac formula. This SNIS estimator, however, requires evaluating the divergence of the score field along the full integration path of the reverse process. 
Such computational overhead poses a serious limitation for systems with many degrees of freedom.

\begin{figure}[t]  
\centering
\includegraphics[width=\textwidth]{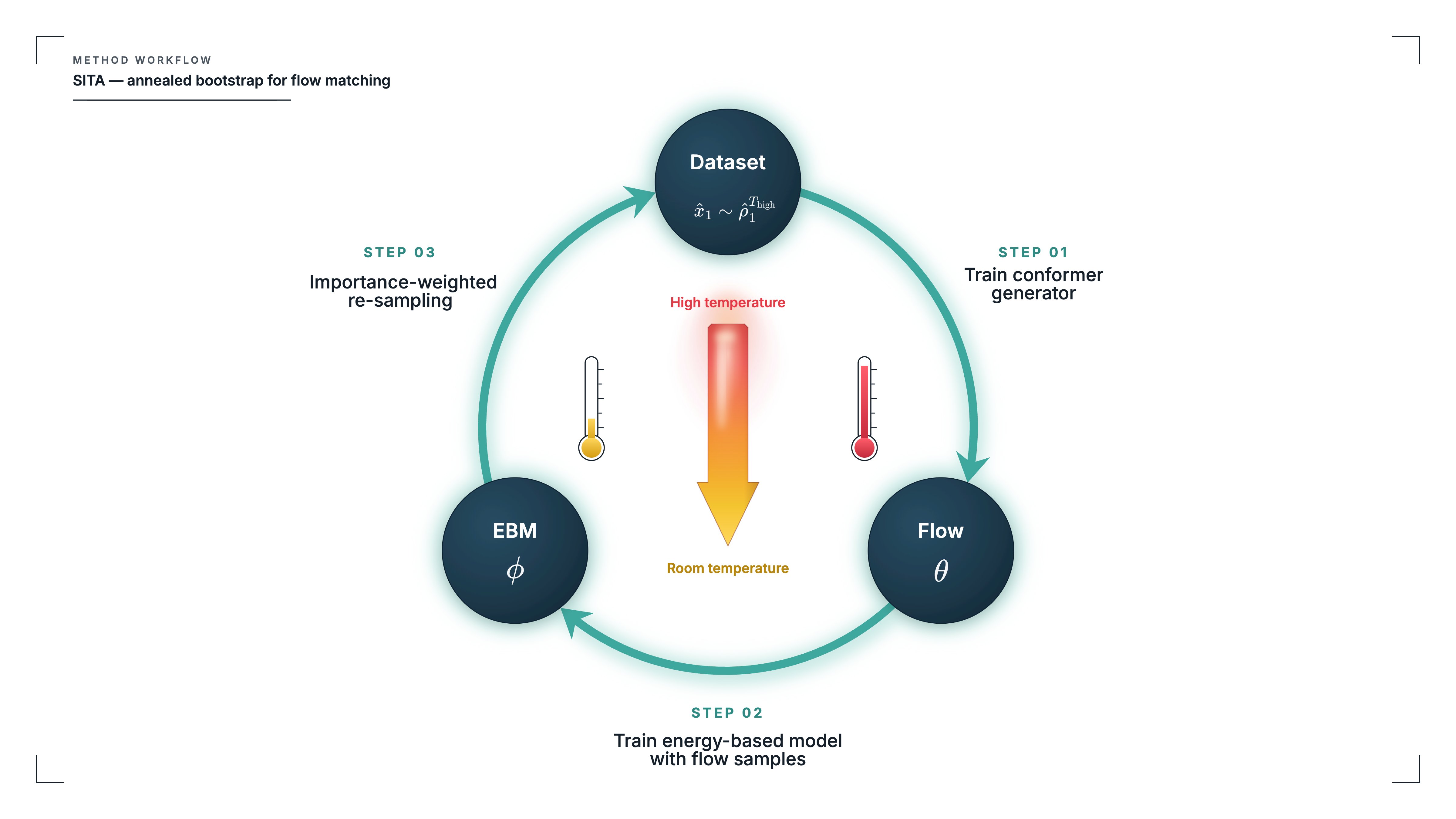}
\caption{SITA training loop: A flow model
$\theta$ trained on high-temperature samples is used to generate proposals for training an energy-based model
$\phi$. Importance-weighted resampling with the learned surrogate likelihoods produces samples at lower temperatures, which seeds the next annealing step without expensive Jacobian computations.
}
\label{fig:sita-framework}
\end{figure}


\paragraph{Main Contributions.} In this work, we present SITA (Figure~\ref{fig:sita-framework}), the first inference-time annealing for continuous flow-matching (\citet{lipman2022flow}) models that circumvents path integral-based SNIS estimators via reliance on a new class of surrogate likelihood estimators called \textit{BoltzNCE} (\citet{aggarwal2025boltznce}). On the benchmark systems of alanine dipeptide and alanine tripeptide, we demonstrate the following:

\begin{itemize}

    \item \textbf{Annealing for flows.} We develop a simple inference-time annealing procedure for continuous flow models that allows large temperature jumps across a pre-defined temperature ladder.

    \item  \textbf{Surrogate-driven annealed importance sampling.} We integrate a BoltzNCE-style surrogate into a temperature annealing ladder, enabling importance-weighted transport from the high-temperature Boltzmann distribution to the room temperature equilibrium target distribution — a regime where direct Jacobian-based reweighting would be prohibitive.

    \item \textbf{Empirical validation.} SITA achieves state-of-the-art results across several metrics despite the inherent bias introduced by the surrogate likelihood estimator.

\end{itemize}


\section{Background}

To address the prohibitive cost of likelihood evaluation in flow-based Boltzmann samplers, SITA combines a flow-based generative model with a surrogate likelihood estimator to enable efficient inference-time annealing for molecular Boltzmann sampling. We first review stochastic interpolants, a unifying framework for continuous-time generative modeling that subsumes diffusion models \& flow matching, which provides the generative backbone of our approach. We then describe BoltzNCE, a recently developed energy-based surrogate likelihood estimator that yields the tractable density evaluations our reweighting procedure relies on.

\subsection{Stochastic Interpolants and Flow Matching}

A stochastic interpolant \citet{albergo2023stochastic,ma2024sit} defines a continuous-time process that transports samples from a base distribution $\rho_0$ to a target distribution $\rho_1 = \pi$. For paired samples $x_0 \sim \rho_0$ and $x_1 \sim \rho_1$, the interpolant takes the form
\begin{equation*}
	\label{interpolant}
	I_t = \alpha_t x_0 + \beta_t x_1,
\end{equation*}
where $\alpha, \beta : [0,1] \to \mathbb{R}$ are continuously differentiable scalar functions satisfying $\alpha_0 = 1$, $\alpha_1 = 0$, $\beta_0 = 0$, and $\beta_1 = 1$. This formulation encompasses diffusion models \citet{song2020score,ho2020denoising}, flow matching \citet{lipman2022flow}, and rectified flows \citet{liu2022flow} as special cases under appropriate choices of $\alpha$ and $\beta$.

The time-marginal density $\rho_t = \text{Law}(I_t)$ coincides with the density evolved by a probability flow transporting mass from $\rho_0$ to $\rho_1$:
\begin{equation*}
	\label{si_ode}
	\dot{X}_t = v_t(X_t), \qquad v_t(x) = \mathbb{E}[\dot{I}_t \mid I_t = x],
\end{equation*}
where the velocity field $v_t$ is characterized as the conditional expectation of the interpolant's time derivative.

Given a coupling $\rho(x_0, x_1)$ between the base and target distributions, the velocity field can be learned by training a neural network $\hat{v}$ via the regression loss
\begin{equation}
	\label{vector field objective SI}
	\mathcal{L}_{v}(\hat{v}) = \mathbb{E}\left[ \|\hat{v}_t(I_t) - \dot{I}_t \|^2\right],
\end{equation}
where the expectation is taken over $(t, x_0, x_1)$. Samples are then generated by integrating the learned dynamics
\begin{equation*}
    \label{flow matching ODE}
    \dot{x}_t = \hat{v}_t(x_t), \quad x_0 \sim \rho_0.
\end{equation*}
The density of the generated samples is given by the change-of-variables formula
\begin{equation*}
    \label{continuity equation}
    \hat{\rho}_{1}(\hat{x}_{1}) = \rho_{0}(x_{0})\exp\left(-\int^{1}_{0} \nabla \cdot v_{t}(x_{t})\, dt\right).
\end{equation*}

\subsection{Surrogate Likelihood Estimators (BoltzNCE)}

SITA employs a surrogate likelihood estimator parameterized as an energy-based model (EBM) to enable efficient reweighting of samples produced by the flow. Training EBMs was historically considered intractable, but recent advances, particularly BoltzNCE (\cite{aggarwal2025boltznce}), have made it practical by combining score matching with noise contrastive estimation (NCE) (\cite{gutmann2010noise,oord2018representation}).

BoltzNCE training proceeds in two stages. First, samples $\hat{x}_1 \sim \hat{\rho}_1$ are drawn from the flow to serve as training data for the EBM. Second, the EBM is fitted to these samples to learn an energy function $U$ over the flow's output distribution $\hat{\rho}_1$.

The training objective is derived by introducing an auxiliary stochastic interpolant from a Gaussian base to $\hat{\rho}_1$. The score matching component takes the form
\begin{equation*}
    \label{bnc_sm}
    \mathcal{L}_{\text{SM}}(\hat{U}) = \mathbb{E}\left[|\alpha_t \nabla\hat{U}_t(\tilde{I}_t) + x_0|^2\right],
\end{equation*}
where the expectation is over $(t, x_0, \hat{x}_1)$ with $\tilde{I}_t = \alpha_t x_0 + \beta_t \hat{x}_1$. While score matching constrains the gradient of the energy, an additional NCE-based term is needed to anchor the energy values themselves. This is achieved by training the model to discriminate between samples from different time points:
\begin{equation*}
	\label{InfoNCE}
	\mathcal{L}_{\text{InfoNCE}}(\hat{U}) = -\mathbb{E}\left[\log \left(\frac{\exp(\hat{U}_t(\tilde{I}_t))}{\exp(\hat{U}_{t^{\prime}}(\tilde{I}_t)) + \exp(\hat{U}_{t}(\tilde{I}_t))}\right)\right],
\end{equation*}
where $t'$ denotes a contrastive time point and the expectation is over $(t', t, x_0, \hat{x}_1)$. The full BoltzNCE objective combines both terms:
\begin{equation}
	\label{eqn:combined_loss}
	\mathcal{L}_{\text{BoltzNCE}}(\hat{U}) = \mathcal{L}_{\text{SM}}(\hat{U}) + \mathcal{L}_{\text{InfoNCE}}(\hat{U}).
\end{equation}

\section{Scalable Inference-Time Annealing}

In this section, we outline how through a combination of surrogate likelihood estimation and flow annealing we facilitate a highly scalable algorithm to model the Boltzmann distribution of physical systems. Let $\{T_k\}^{K}_{k=0}$ denote a decreasing sequence of temperatures and $E(\cdot)$ be the energy function associated with the target Boltzmann distribution $\pi(x_1)$, which we can evaluate exactly. Given a flow with parameters $\theta$ pre-trained on high-temperature simulation data and an EBM with parameters $\phi$ pre-trained on flow generated outputs, SITA's multi-phase annealing bootstrap proceeds as follows:

\begin{enumerate}[label=(\roman*)]
    \item \textbf{Anneal the flow.} Samples $\hat{x}_{1} \sim \hat{\rho}^{T_{k+1}}_{1}$ are generated from the flow by drawing $x_{0} \sim \mathcal{N}(0, \frac{T_{k+1}}{T_{k}}\mathbf{I})$ and integrating the generative ODE.
    \item \textbf{Finetune the EBM.} Using flow generated samples, the EBM is finetuned to better approximate the likelihoods of $\hat{x}_{1} \sim \hat{\rho}^{T_{k+1}}_{1}$. 
    \item \textbf{Importance sampling.} We use the newly fitted EBM to compute importance weights $\tilde{w}(\hat{x}_1) = \exp(-\frac{1}{k_BT_{k+1}}E(\hat{x}_1) - \hat{U}_{\phi}(\hat{x}_1))$ for each sample generated by the flow. Subsequently, samples are re-weighted to provide a new dataset at temperature $T_{k+1}$ to retrain the flow.
    \item \textbf{Finetune the flow.} Using the set of importance weighted samples, the flow is finetuned to better approximate the annealed target distribution. Upon completion, the whole process begins again until the final temperature $T_{K}$ is reached.
\end{enumerate}

A workflow diagram of SITA is illustrated in Figure~\ref{fig:sita-framework}, accompanied with pseudocode in Algorithm \ref{alg:sita}. We emphasize that the same two models are maintained throughout the entire annealing bootstrap process. Their respective optimizers are simply re-initialized at the start of each new finetuning step.

\begin{algorithm}[H]
\caption{Scalable Inference-Time Annealing (SITA)}
\label{alg:sita}
\begin{algorithmic}[1]
\REQUIRE Temperature ladder $\{T_k\}_{k=0}^{K}$ (decreasing), energy function $E(\cdot)$ defining the target distribution $\pi$
\REQUIRE Pre-trained flow $\hat{v}_\theta$, pre-trained EBM $\hat{U}_\phi$
\ENSURE Flow model $\hat{v}_\theta$ approximates $\pi$ at the target temperature $T_K$
\FOR{$k = 0$ to $K-1$}
    \STATE \textit{// Anneal the flow}
    \STATE Draw $x_0 \sim \mathcal{N}(0, \frac{T_{k+1}}{T_k}\mathbf{I})$
    \STATE Generate $\hat{x}_1 \sim \hat{\rho}_1^{T_{k+1}}$ by integrating generative ODE
    \STATE
    \STATE \textit{// Finetune the EBM}
    \STATE Reinitialize optimizer for $\phi$
    \STATE Update $\phi$ using $\mathcal{L}_{\text{BoltzNCE}}(\hat{U}_\phi)$ (see Equation \ref{eqn:combined_loss}) for a total of $N_{\text{EBM}}$ epochs
    \STATE
    \STATE \textit{// Importance-weighted resampling}
    \STATE Compute log weights $\log\tilde{w}(\hat{x}_1) \leftarrow -\frac{1}{k_B T_{k+1}}E(\hat{x}_1) - \hat{U}_\phi(\hat{x}_1)$
    \STATE
    \STATE \textit{// Clip at 99th percentile}
    \STATE $\log\tilde{w}(\hat{x}_1) \leftarrow \texttt{quantile\_clip}(\log\tilde{w}(\hat{x}_1), 0.99)$
    \STATE Resample $\{\hat{x}_1\}$ according to self-normalized weights $w(\hat{x}_1)$
    \STATE
    \STATE \textit{// Finetune the flow}
    \STATE Reinitialize optimizer for $\theta$
    \STATE Update $\theta$ on resampled data using $\mathcal{L}_{v}(\hat{v}_\theta)$ (see Equation \ref{vector field objective SI}) for a total of $N_{\text{Flow}}$ epochs
\ENDFOR
\RETURN $\hat{v}_\theta$
\end{algorithmic}
\end{algorithm}

\subsection{Temperature Steering of the Probability Flow ODE}

We observe that velocity field models can induce temperature changes in the generated distribution without explicit temperature conditioning during training. A velocity field $\hat{v}$ trained on simulation data at temperature $T_{\text{high}}$ transports samples from the base distribution $\rho_0(x_0) = \mathcal{N}(0, \mathbf{I})$ to produce $\hat{x}_1 \sim \hat{\rho}^{T_{\text{high}}}_1$, where
\begin{equation*}
    \hat{\rho}^{T_{\text{high}}}_1(\hat{x}_1) = \rho_0(x_0) \exp\left(-\int_0^1 \nabla \cdot \hat{v}_t(x_t)\, dt\right).
\end{equation*}
Crucially, the flow learns to map $\rho_0$ to an approximation of the high-temperature target without explicitly modeling the temperature dependence. This information is encoded implicitly in the flow's output distribution.

This implicit encoding enables a simple mechanism for temperature transfer. By raising the flow density to the power $\kappa = T_{\text{high}} / T_{\text{low}}$, we obtain the proportionality relation
\begin{equation*}
    \hat{\rho}^{T_{\text{low}}}_1(\hat{x}_1) \propto \left[\hat{\rho}^{T_{\text{high}}}_1(\hat{x}_1)\right]^\kappa = \left[\rho_0(x_0) \exp\left(-\int_0^1 \nabla \cdot \hat{v}_t(x_t)\, dt\right)\right]^\kappa.
\end{equation*}
Consequently, sampling an approximation of the low-temperature distribution $\hat{\rho}^{T_{\text{low}}}_1$ simply requires a rescaling of the base distribution variance by $\kappa^{-1}$ at inference time:
\begin{equation*}
    \left[\rho_0(x)\right]^\kappa = \exp\left(-\frac{T_{\text{high}} \|x\|^2}{2 T_{\text{low}}}\right) \propto \mathcal{N}(0, \kappa^{-1} \mathbf{I}).
\end{equation*}
This rescaling allows SITA to achieve large temperature jumps without modifying the velocity field model's architecture or enforcing volume preservation under the instantaneous change of variables. Further details on temperature steerability are provided in Appendix~\ref{app:tsf}.

\subsection{Importance Sampling with Surrogate Likelihood Estimators}

The reliance on an EBM as a surrogate for exact likelihoods of generated samples naturally introduces bias into the estimates of importance weights. Concretely, for any test function $f(\cdot)$, generator $\rho_{\theta}$, and density model $q_{\phi}$, the self-normalized estimator converges to:

\begin{equation*}
    \frac{ \mathbb{E}_{\rho_{\theta}} \left[ w(x) f(x) \right] }{\mathbb{E}_{\rho_{\theta}} \left[ w(x) \right]} = \frac{ \int \rho_{\theta}(x) \frac{\pi(x)}{q_{\phi}(x)} f(x) dx }{ \int \rho_{\theta}(x) \frac{\pi(x)}{q_{\phi}(x)} dx } = \frac{ \int \frac{\rho_{\theta}(x)}{q_{\phi}(x)} \pi(x) f(x) dx }{ \int \frac{\rho_{\theta}(x)}{q_{\phi}(x)} \pi(x) dx }
\end{equation*}

Clearly, the recovered distribution is a tilted version of the target

\begin{equation}
    \label{equ:bias}
    \tilde{\pi}(x) \propto \frac{\rho_{\theta}(x)}{q_{\phi}(x)} \pi(x)
\end{equation}
 where the target $\pi$ is recovered exactly only when $\rho_{\theta} = q_{\phi}$. Empirically, we demonstrate that despite the bias introduced by SITA's surrogate likelihood estimator, the EBM derived importance weights deliver superior performance on benchmark molecular systems.

\section{Related Work}

Section \ref{sec:intro} situates our contribution within the broader literature on bootstrap generative modeling, where temperature annealing for continuous-time flows remains conspicuously under-explored --- a gap this work directly addresses.

\paragraph{Flow-based Boltzmann samplers.} Normalizing flows were first applied to molecular sampling by (\citet{noe2019boltzmann}), who introduced Boltzmann Generators as bijections from a tractable base to a learned approximation of the Boltzmann distribution. Subsequent work improved expressivity via equivariant architectures (\citet{kohler2020equivariant, satorras2021en}) and via classical force-field couplings (\citet{wirnsberger2020targeted}), but mode coverage has remained the binding empirical constraint: a maximum-likelihood objective evaluated on equilibrium data cannot, on its own, place mass on modes that are absent from the training set. SITA retains the architectural advantages of continuous-time flows while replacing the maximum-likelihood objective with a temperature-annealing bootstrap that progressively transfers samples from a tractable high-temperature distribution to the cold target, sidestepping the circularity that constrains direct equilibrium training.

\paragraph{Diffusion-based Sampling.} Amortized samplers built on diffusion models have proliferated rapidly, with methods distinguished primarily by how the optimal drift is estimated and the cost paid to obtain it. Simulation-based approaches (\citet{zhang2022path,berner2024improved,vargas2023denoising,ctmd}) train a learned diffusion against the Boltzmann log-density, exploiting the fast mode mixing of the forward process at the cost of full trajectory simulation during training. Conversely, simulation-free alternatives such as iDEM (\citet{idem}) and TSM (\citet{deBortoli2024target}) sidestep this simulation cost by regressing on the score directly, achieving better scaling at the price of high-variance estimates of the score function. More recent work explores alternative bridge constructions, including Schrödinger bridges (\citet{asbs, bridge_matching}) and underdamped Langevin bridges (\citet{blessing2024underdamped}). In parallel, iterative bootstrap procedures (\citet{akhound2025progressive, ptsd}) have been introduced to address the mode-coverage gap by progressively annealing the sampler toward the target temperature.

\paragraph{Bootstrapping via Importance Sampling.} A complementary line of work bootstraps a flow against itself by exploiting an importance-sampling correction along an annealing path, drawing on Annealed Importance Sampling (\citet{neal}) and Sequential Monte Carlo (\citet{smc}). FAB (\citet{fab}) uses AIS to construct importance-weighted training targets along a geometric path; (\citet{cmt_bless}) extends this with constrained mass transport; TA-BG (\citet{temp_bg}) replaces the geometric path with a temperature schedule. 

\paragraph{Continuous-time Generative Models.} SITA's parameterization builds on continuous-time generative models (\citet{chen2018neuralode, song2021score}) and the stochastic interpolant framework (\citet{albergo2023stochastic,ma2024sit}), which subsumes flow matching (\citet{lipman2023flow}) and rectified flows (\citet{liu2023rectified}) as special cases. These models avoid the architectural restrictions of coupling-based flows but trade exact likelihood for ODE-based likelihood evaluation, whose cost scales prohibitively with the dimension of the state space and with the desired numerical accuracy of the ODE solver.

\section{Experiments}
\label{sec:experiments}

We evaluate SITA against PITA on two molecular benchmark systems: alanine dipeptide and alanine tripeptide. For our bootstrapping experiments, we pre-train SITA's flow model on the same 1200K MD simulation data as PITA. All evaluation metrics reported in this section are calculated against the same test-split used in PITA, containing randomized frames from a 300K MD simulation. All metrics were evaluated across 3 seeds by generating 10,000 samples from SITA's flow post-bootstrap. To quantify mode coverage and precision, we compute Wasserstein distances over Ramachandran coordinates ($\mathbb{T}$-$\mathbb{W}_{2}$) and energy distributions ($\mathcal{E}$-$\mathbb{W}_{1}$, $\mathcal{E}$-$\mathbb{W}_{2}$). Additionally, we measure KL divergence between ground-truth and generated Ramachandran distributions (Rama-KL). Lastly, we account for all MD energy evaluations consumed by both methods. After the shared upfront cost of generating the training set ($5 \times 10^{7}$ evaluations for each system), bootstrapping incurs one to two orders of magnitude fewer evaluations, reflecting the principal efficiency claim of SITA.

\paragraph{Baselines} Beyond PITA, we include three baselines from \citet{akhound2025progressive}: Temperature Annealed Boltzmann Generators (TA-BG; \citealt{temp_bg}), a diffusion model (MD-Diff), and a normalizing flow (MD-NF). MD-Diff and MD-NF are trained directly on MD samples simulated at 300K, while TA-BG is a normalizing flow trained via temperature-annealed bootstrapping. Energy-evaluation reporting for MD-Diff and MD-NF reflects the cost of generating their MD training data, while reporting for TA-BG reflects the cost of bootstrapping only.

\paragraph{Architecture.} SITA's flow model is a Geometric Vector Perceptron graph neural network, introduced by \cite{GVP}, equipped with $E(3)$-invariance in its scalar features and E(3)-equivariance in its vector features. To parameterize the EBM, we use the Graphormer architecture \cite{ying2021transformers} due to its successful application in \cite{aggarwal2025boltznce}. Both flow and score matching objectives used for training make use of trigonometric interpolants introduced in \cite{albergo2023stochastic}, where $\alpha_{t} = \cos(\frac{\pi}{2}t)$ and $\beta_{t} = \sin(\frac{\pi}{2}t)$.

\subsection{Main results}

\begin{figure}[H]  
\centering
\includegraphics[width=\textwidth]{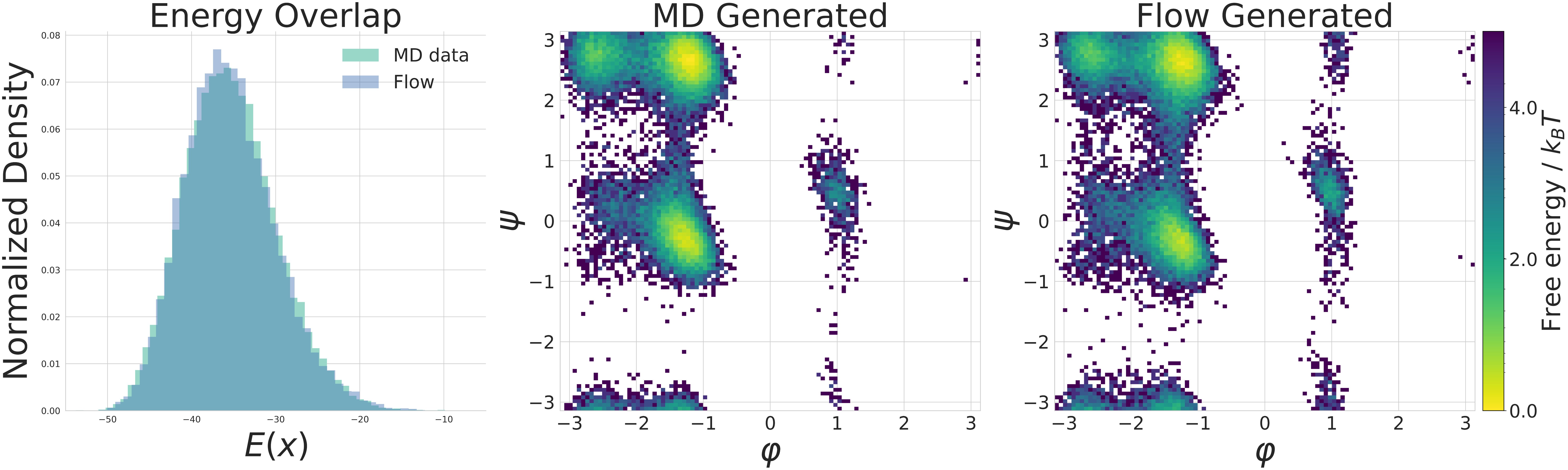}
\caption{Alanine dipeptide comparison on 30,000 samples from both SITA and MD simulation. Flow-generated samples closely match the MD reference in both energy distribution and Ramachandran free energy landscape, capturing all major conformational basins.
}
\label{fig:energy_rama}
\end{figure}
\vspace{-1.75em}
\paragraph{Alanine Dipeptide (ADP).} SITA is applied to the task of sampling conformations of alanine dipeptide at the target temperature of 300K, having been initially pre-trained on an MD dataset simulated at 1200K. We define an annealing schedule over temperatures 755.95K, 555.52K, 408.24K, 300.00K. We note this includes one extra step of 408.24K compared to PITA applied to alanine dipeptide.

At each annealing step, SITA's annealed flow generates 200,000 samples to fine-tune the EBM. A new set of 100,000 samples is then produced by importance weighted re-sampling of the original 200,000 samples using the EBM to calculate the SNIS estimate. We report the effective sample size (ESS) for each annealing step in Table~\ref{adp-ess-table}. SITA achieves the best performance on Rama-KL and the energy Wasserstein-2 metric, as can be seen from Table~\ref{adp-results-table}. While SITA outperforms PITA on the energy Wasserstein-1 metric, it is itself outperformed by MD-NF, a baseline trained directly on equilibrium samples from MD at 300K. The poor Rama-KL of MD-NF, however, suggests this conclusion is misleading: MD-NF likely produces low-energy conformers from only a subset of the modes, a signature of mode collapse. Visual comparison confirms that SITA captures all major conformational basins and matches the reference energy distribution (Figure~\ref{fig:energy_rama}). SITA remains competitive with PITA on the torsion metric ($\mathbb{T}$-$\mathcal{W}_2$), where accurately recovering mode populations remains challenging.

\vspace{10pt}
\begingroup

\begin{table}[H]
\centering
\resizebox{\textwidth}{!}{%
\begin{tabular}{lcccccc}
\toprule
 & Rama-KL $\downarrow$& Energy-$\mathcal{W}_1$ $\downarrow$ & Energy-$\mathcal{W}_2$ $\downarrow$ & $\mathbb{T}$-$\mathcal{W}_2$ $\downarrow$& \#Energy Evals $\downarrow$\\
\midrule
PITA & $4.773 \pm 0.460$ & $1.530 \pm 0.068$ & $1.615 \pm 0.053$ & $\mathbf{0.270 \pm 0.023}$ & $5.01 \times 10^7$ \\
TA-BG & $14.993 \pm 0.002$ & $83.457 \pm 0.070$ & $86.176 \pm 0.104$ & $0.979 \pm 0.012$ & $5 \times 10^{7}$ \\
MD-Diff & $1.308 \pm 0.072$ & $3.627 \pm 0.023$ & $3.704 \pm 0.026$ & $0.310 \pm 0.001$ & $5 \times 10^7$ \\
MD-NF & $13.533 \pm 0.024$ & $\mathbf{0.551 \pm 0.062}$ & $1.198 \pm 0.069$ & $0.403 \pm 0.045$ & $5 \times 10^7$ \\
SITA & $\mathbf{0.517 \pm 0.013}$ & $0.865 \pm 0.080$ & $\mathbf{0.939 \pm 0.079}$ & $0.326 \pm 0.004$ & $5.08 \times 10^7$ \\
\bottomrule
\end{tabular}%
}
\caption{Alanine Dipeptide at 300K. Metrics calculated over 10,000 samples across 3 seeds.}
\label{adp-results-table}
\end{table}

\begin{table}[H]
\centering
\begin{tabular}{lcccc}
\toprule
 & 755.95K & 555.52K & 408.24K & 300.00K \\
\midrule
ESS $\uparrow$ & 0.131 & 0.218 & 0.192 & 0.268 \\
\bottomrule
\end{tabular}
\caption{ESS across temperatures for ADP.}
\label{adp-ess-table}
\end{table}
\endgroup

\paragraph{Alanine Tripeptide (ATP).} We further evaluate SITA on the larger alanine tripeptide system, using the same annealing schedule and bootstrapping procedure as in the alanine dipeptide experiment. Note that in this application, the annealing schedules between SITA and PITA are now the same. Here, we observe that SITA outperforms all methods on nearly all metrics (Table \ref{atp-results-table}) with the exception of the torsion metric ($\mathbb{T}$-$\mathcal{W}_2$). Remarkably, SITA is able to achieve this superior performance without requiring any relaxation of generated samples. In contrast, PITA and TA-BG must run a short MD refinement on their generated samples at the target temperature to remain competitive.

\vspace{10pt}
\begingroup

\begin{table}[H]
\centering
\resizebox{\textwidth}{!}{%
\begin{tabular}{lccccr}
\toprule
 & Rama-KL & Energy-$\mathcal{W}_1$ $\downarrow$ & Energy-$\mathcal{W}_2$ $\downarrow$ & $\mathbb{T}$-$\mathcal{W}_2$ & \#Energy Evals \\
\midrule
PITA                  & $1.209 \pm 0.144$ & $2.567 \pm 0.108$ & $2.592 \pm 0.107$ & $0.521 \pm 0.006$ & $8 \times 10^7$ \\
PITA (w/o relaxation) & $8.535 \pm 0.254$          & $86.270 \pm 0.294$         & $87.695 \pm 0.294$         & $0.651 \pm 0.013$ & $5.01 \times 10^7$ \\
TA-BG & $2.078 \pm 2.088$ & $4.782 \pm 0.076$ & $4.863 \pm 0.082$ & $\mathbf{0.347 \pm 0.014}$ & $8 \times 10^7$\\
TA-BG (w/o relaxation) & $14.988 \pm 0.009$ & $173.042 \pm 0.717$ & $178.558 \pm 0.732$ & $1.310 \pm 0.004$ & $8 \times 10^7$ \\
MD-Diff               & $9.662 \pm 0.085$          & $7.416 \pm 0.130$          & $7.599 \pm 0.137$          & $0.424 \pm 0.011$ & $8 \times 10^7$ \\
SITA (w/o relaxation) & $\mathbf{0.361 \pm 0.025}$ & $\mathbf{1.933 \pm 0.298}$ & $\mathbf{2.054 \pm 0.268}$ & $0.798 \pm 0.268$ & $5.08 \times 10^7$ \\
\bottomrule
\end{tabular}
}
\caption{Alanine Tripeptide at 300K. Metrics calculated over 10,000 samples across 3 seeds.}
\label{atp-results-table}
\end{table}

\begin{table}[H]
\centering
\begin{tabular}{lcccc}
\toprule
 & 755.95K & 555.52K & 408.24K & 300.00K \\
\midrule
ESS $\uparrow$ & 0.045 & 0.067 & 0.074 & 0.065 \\
\bottomrule
\end{tabular}
\caption{ESS across temperatures for ATP.}
\label{atp-ess-table}
\end{table}

\endgroup

\subsection{Metropolis Hastings Refinement}

Lastly, we apply the Independent Metropolis-Hastings (IMH) algorithm \citet{amc_nf} as a post-hoc refinement of generated samples, using both SITA's flow and EBM in the accept-reject step. IMH is a Markov chain Monte Carlo (MCMC) method whose accept-reject mechanism guarantees convergence to a target distribution, provided the densities of both the target and the proposal can be evaluated. Our goal in this setting is to obtain samples whose empirical distribution more closely approximates the target Boltzmann distribution $\pi(x)$ at $300$K. In the absence of an exact proposal density, we replace the proposal in the IMH acceptance probability with a learned surrogate $q_\phi$, yielding
\begin{equation*}
    \alpha(x, y) = \min \left( 1, \frac{\pi(y)\, q_\phi(x)}{\pi(x)\, q_\phi(y)} \right).
\end{equation*}

Inclusion of the EBM introduces a bias that breaks the guarantee of $\pi(x)$ as the stationary distribution of the chain; the chain instead converges to the tilted distribution $\tilde{\pi}(x)$, defined in Equation~\ref{equ:bias} (see Appendix \ref{app:IMH} for full derivation). Despite this bias, we demonstrate on alanine tripeptide that the IMH refinement still improves SITA's performance. We compare IMH refinement, ran for 50 steps, against unadjusted SITA and SITA augmented with an additional importance sampling step (SITA-IS). Each experiment uses 10,000 samples and is repeated across three random seeds. Results appear in Table~\ref{imh-results-table}, with energy evaluation counts reflecting the post-bootstrap phase only. Further algorithmic details are available in Appendix~\ref{app:code}. 

\begin{table}[H]
\centering

\resizebox{\textwidth}{!}{%
\begin{tabular}{lcccccr}
\toprule
 & ESS $\uparrow$ & Rama-KL & Energy-$\mathcal{W}_1$ $\downarrow$ & Energy-$\mathcal{W}_2$ $\downarrow$ & $\mathbb{T}$-$\mathcal{W}_2$ & \#Energy Evals \\
\midrule
SITA & --- & $0.361 \pm 0.025$ & $1.933 \pm 0.298$ & $2.054 \pm 0.268$ & $0.798 \pm 0.268$ & --- \\
SITA-IS & $\mathbf{0.191 \pm 0.006}$ & $0.991 \pm 0.042$ & $\mathbf{0.681 \pm 0.310}$ & $\mathbf{0.734 \pm 0.287}$ & $0.770 \pm 0.004$ & $10^4$ \\
SITA-IMH (50 steps) & --- & $\mathbf{0.313 \pm 0.020}$ & $0.996 \pm 0.029$ & $1.702 \pm 0.025$ & $\mathbf{0.704 \pm 0.029}$ & $5 \times 10^5$ \\
\bottomrule
\end{tabular}
}
\caption{Independent Metropolis Hastings refinement for Alanine Tripeptide at 300K. All metrics calculated over 10,000 samples across 3 seeds.}
\label{imh-results-table}
\end{table}

From Table~\ref{imh-results-table}, we observe improvements over unadjusted SITA across the reported metrics. SITA-IS achieves the lowest Energy-$\mathcal{W}_1$ and Energy-$\mathcal{W}_2$ values; however, it incurs a marked degradation in Rama-KL and yields only a marginal improvement in $\mathbb{T}$-$\mathcal{W}_2$. This trade-off reflects a loss of sample diversity introduced by importance-weighted resampling, as evidenced by the modest effective sample size of $0.191$. SITA-IMH improves every metric while preserving sample diversity, attaining the best Rama-KL and $\mathbb{T}$-$\mathcal{W}_2$ scores, albeit at the cost of additional energy evaluations.

\subsection{TICA Evaluations}

We evaluate generative model performance using time-lagged independent component analysis (TICA) \cite{noe_tica, tica_pande}, reporting Wasserstein-1 and Wasserstein-2 distances between TICA projections of generated and MD samples. We find that two methodological choices — the lag time and the protocol for down-sampling MD trajectory frames — meaningfully affect the resulting metrics, and we outline below the protocol we adopt. A fuller discussion, including comparison with the values reported in \citet{akhound2025progressive}, is provided in Appendix~\ref{app:tica}.

\begin{figure}[!htbp]
\centering
\includegraphics[width=\textwidth]{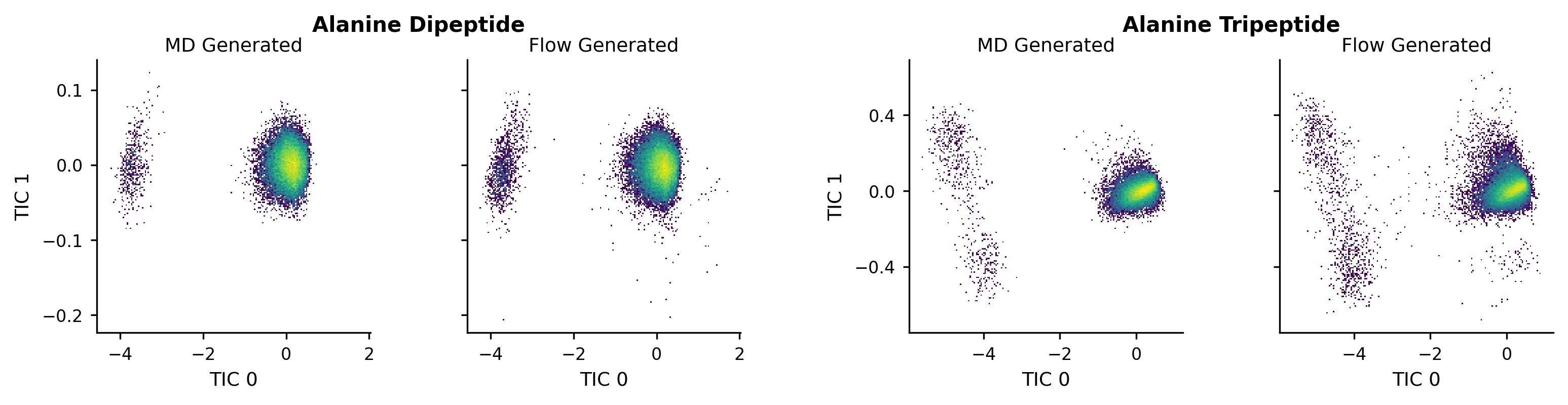}
\caption{TICA projection density scatter plots comparing MD-generated and SITA flow-generated samples for alanine dipeptide (left) and alanine tripeptide (right). Each panel plots TICA projections of 30,000 samples, with color intensity reflecting local sample density.}
\label{fig:tica_projections}
\end{figure}

MD trajectories are not i.i.d.\ samples and exhibit strong time correlations. Down-sampling frames for TICA projection by selecting a contiguous block of initial frames can drop modes or distort their relative weights. We therefore fit TICA on the full MD trajectory and uniformly subsample frames for the projection step. We also advocate a single standardized lag time across systems: small lag times render TICA sensitive to fast motions that may be indistinguishable from noise, and per-system tuning complicates cross-system comparison. To apply these conventions consistently in our comparison with \citet{akhound2025progressive}, we regenerated MD trajectories for both alanine dipeptide and alanine tripeptide following the simulation configuration described therein, as the original trajectories were not publicly available.

Table~\ref{tica-table} reports TICA-$\mathcal{W}_{1}$ and TICA-$\mathcal{W}_{2}$ metrics under both down-sampling protocols, with significant improvement under uniform sampling. For direct comparability with the values in \citet{akhound2025progressive}, we use lag times of 100 for ADP and 10 for ATP; recomputed results for ATP with a lag time of 100 show substantially closer agreement with MD and are reported in Appendix~\ref{app:tica}. Figure~\ref{fig:tica_projections} shows density scatter plots of the first two TICA components for both systems, illustrating qualitative agreement between SITA and MD.

\begin{table}[H]
\centering
\resizebox{\textwidth}{!}{%
\begin{tabular}{lcccccccc}
\toprule
 & \multicolumn{4}{c}{ADP TICA} & \multicolumn{4}{c}{ATP TICA} \\
\cmidrule(lr){2-5} \cmidrule(lr){6-9}
 & $\mathcal{W}_1$ & $\mathcal{W}_2$ & $\mathcal{W}_1^*$ & $\mathcal{W}_2^*$ 
 & $\mathcal{W}_1$ & $\mathcal{W}_2$ & $\mathcal{W}_1^*$ & $\mathcal{W}_2^*$ \\
\midrule
PITA                  & ---               & ---               & ---               & ---               & $0.272 \pm 0.017$ & $0.952 \pm 0.055$ & --- & --- \\
PITA (w/o relax.)     & $0.118 \pm 0.006$ & $0.379 \pm 0.028$ & ---               & ---               & $0.405 \pm 0.014$ & $0.999 \pm 0.043$ & --- & --- \\

TA-BG                  & ---               & ---               & ---               & ---               & $0.082 \pm 0.001$ & $0.454 \pm 0.001$ & --- & --- \\
TA-BG (w/o relax.)     & $0.219 \pm 0.013$ & $0.685 \pm 0.034$ & ---               & ---               & $0.321 \pm 0.001$ & $0.648 \pm 0.000$ & --- & --- \\

MD-Diff               & $0.113 \pm 0.001$ & $0.579 \pm 0.004$ & ---               & ---               & $0.059 \pm 0.006$ & $0.426 \pm 0.010$ & --- & --- \\
MD-NF                 & $0.138 \pm 0.003$ & $0.586 \pm 0.003$ & ---               & ---               & --- & --- & --- & --- \\
SITA (w/o relax.)                  & $0.155 \pm 0.009$ & $0.629 \pm 0.212$ & $0.089 \pm 0.012$ & $0.414 \pm 0.035$ & $0.311 \pm 0.014$ & $0.943 \pm 0.021$ & $0.179 \pm 0.012$ & $0.546 \pm 0.035$ \\
\bottomrule
\end{tabular}
}
\caption{TICA-$\mathcal{W}_1$ and TICA-$\mathcal{W}_2$ metrics for alanine dipeptide (ADP) and alanine tripeptide (ATP). Columns marked with $^*$ denote metrics computed using uniform frame sampling along the MD trajectory, while unmarked columns use the first-frame down-sampling method from the original PITA paper. All metrics are calculated over 10,000 samples across 3 seeds.}
\label{tica-table}
\end{table}

\section{Conclusion}
\label{sec:conclusion}

We introduce SITA, a scalable framework that trains continuous flow models to recover target Boltzmann distributions via temperature annealing, achieving computational efficiency by avoiding vector-field divergence computations. Our method establishes a new state-of-the-art on both alanine dipeptide and alanine tripeptide. We further demonstrate empirically that surrogate likelihood estimators offer a tractable route to modeling molecular ensembles with many degrees of freedom, a regime in which existing methods struggle. Future directions include architectural optimization, cross-system transferability, and applications to larger molecular systems.

\acks{This research was undertaken thanks to funding through R35GM140753 from the National Institute of General Medical Sciences and the American Chemical Society's Graduate Student Success Grant}

\newpage
\appendix

\section{Temperature Steerability in Flows}
\label{app:tsf}

\subsection{Normalizing Flows} In their original formulation, normalizing flows learn a differentiable bijection $x = \phi_{\theta}(z)$ that maps samples from a simple base distribution $\rho_{Z}(z)$, typically a multivariate Gaussian, to some target distribution $\pi(x)$. By the change-of-variables formula, likelihoods under the normalizing flow can be evaluated as 

\begin{equation}
    \label{norm_flow}
    \rho_{\theta}(x) = \rho_{Z}(\phi^{-1}_{\theta}(x)) \left| \frac{\partial\phi^{-1}_{\theta}}{\partial x}(x) \right|,
\end{equation}

where $z = \phi^{-1}_{\theta}(x)$ and $\left| \partial \phi^{-1}_{\theta}/\partial x \right|$ is the Jacobian determinant of the flow map $\phi_{\theta}$, which is comprised of a discrete sequence of invertible neural network layers with parameters $\theta$. In the continuous-time perspective, a continuous-time normalizing flows (CNF) replaces this stack of maps with a time-indexed family of maps $X_{t}(x) = \phi_{t}(x_{0})$ with $t \in [0, 1]$ that acts as the pushforward of the base distribution $\rho_{0}$ at time $t=0$ to some time-varying density $\rho_t$ that terminates at $\rho_{1}$ at time $t=1$. Conveniently, the flow map $\phi_{t}$ is characterized by the ordinary differential equation (ODE)

\begin{equation}
    \Dot{X}_{t}(x) = \frac{d}{dt}\phi_{t}(x) = v_{t}(\phi_{t}(x)), \qquad X_{t=0}(x) = x
\end{equation}

where $v_{t}$ is the \textit{velocity field} governing the transport of individual samples. From the density perspective, the pushforward of $\rho_{0}$ under $\phi_{t}$ yields a time-varying density $\rho_{t}$ that satisfies the continuity equation

\begin{equation}
    \label{ceq}
    \partial_{t} \rho_{t} + \nabla \cdot (v_{t}\rho_{t})= 0 \qquad \text{with boundaries } \rho_{t=0} = \rho_{0} \text{ and } \rho_{t=1} = \rho_{1}.
\end{equation}

Denoting $\hat{\rho}_{1}$ as the CNF's output distribution,  likelihoods for generated samples $\hat{x}_{1}$ are given by the instantaneous change-of-variables formula,

\begin{equation}
    \hat{\rho}_{1}(\hat{x}_{1}) = \rho_{0}(x_{0})\exp\left(-\int^{1}_{0} \nabla\cdot v_{t}(x_{t}) dt\right).
\end{equation}

Generative modeling thus reduces to estimating a velocity field $v_{t}$ such that \cref{ceq} is satisfied.

\subsection{Explicit Temperature Steering} 
 As shown in \cite{TSF}, normalizing flows can be extended to model thermodynamic ensembles across a range of temperatures. When trained on high-temperature simulation data, a flow can generate samples at lower temperatures by \textit{explicitly} accounting for temperature as a parameter of both the flow map and base distribution. Concretely, a change of temperature from $T_{\text{high}} \rightarrow T_{\text{low}}$ induces the following scaling relation in the output density:

\begin{equation}
    \label{steer}
    \rho^{T_{\text{low}}}_{Z}(z) \left| \frac{\partial\phi^{-1}_{T_\text{low}}}{\partial x}(x) \right| \propto \left[ \rho^{T_{\text{high}}}_{Z}(z) \left| \frac{\partial\phi^{-1}_{T_\text{high}}}{\partial x}(x) \right| \right]^{\kappa}
\end{equation}

where $\kappa = T_{\text{high}} / T_{\text{low}}$. A flow satisfying this relation is said to be \textit{temperature steerable}. Under volume preservation, the relation decouples: proportionality in the base distribution implies proportionality in the Jacobian determinants

\begin{equation}
    \rho^{T_{\text{low}}}_{Z}(z) \propto \left[ \rho^{T_{\text{high}}}_{Z}(z) \right]^{\kappa} \implies \left| \frac{\partial\phi^{-1}_{T_\text{low}}}{\partial x}(x) \right| \propto \left[ \left| \frac{\partial\phi^{-1}_{T_\text{high}}}{\partial x}(x) \right| \right]^{\kappa},
\end{equation}

a condition satisfied by adopting a Gaussian base with temperature-scaled variance, $z \sim \mathcal{N}(0, T_{\text{low}}\mathbf{I})$. SITA \textit{implicitly} accounts for temperature through these proportionality relations under the instantaneous change of variables.

\section{Independent Metropolis-Hastings with Surrogate Likelihoods}
\label{app:IMH}

Independent Metropolis-Hastings (IMH) is a special case of the Metropolis-Hastings algorithm in which the proposal distribution $p(y)$ does not depend on the current state of the Markov chain. Given a target density $\pi(x)$, a candidate $y \sim p(\cdot)$ is accepted with probability $\alpha(x, y) = \min\left(1, \frac{\pi(y)\, p(x)}{\pi(x)\, p(y)}\right)$, producing a Markov chain whose stationary distribution is $\pi$. The efficiency of the sampler is governed by how closely $p$ approximates $\pi$: when the two coincide every proposal is accepted, and the chain returns i.i.d.\ samples from the target.

Substituting the proposal density with the surrogate $q(\cdot)$ yields a modified acceptance ratio,
\begin{equation}
    \label{eqn:acc_iw}
    \hat{\alpha}(x, y) = \min\left(1,\, \frac{\pi(y)\, q(x)}{\pi(x)\, q(y)}\right) = \min\left(1,\, \frac{w(y)}{w(x)}\right),
\end{equation}
where $w(x) := \pi(x) / q(x)$ is the importance ratio. Note that candidate states continue to be drawn from the true proposal $p$. The resulting chain no longer satisfies detailed balance with respect to $\pi$; it instead admits a tilted stationary distribution $\tilde{\pi}(x) \propto \pi(x)\, p(x)/q(x)$, whose deviation from $\pi$ is governed entirely by the discrepancy between $p$ and $q$. When $q \equiv p$ the standard IMH chain is recovered.

\begin{prop}
A Markov chain that proposes from $p(\cdot)$ and accepts in accordance to the target $\pi(\cdot)$ and surrogate likelihood estimator $q(\cdot)$ admits the stationary distribution

\begin{equation}
    \tilde{\pi}(x) = \frac{1}{\tilde{Z}} \pi(x) \frac{p(x)}{q(x)}, \qquad \tilde{Z} = \int \pi(x) \frac{p(x)}{q(x)} dx
\end{equation}

\end{prop}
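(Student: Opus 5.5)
I will prove the proposition by verifying detailed balance of the modified chain with respect to $\tilde{\pi}$. Detailed balance implies stationarity: integrating it over the source state shows that $\tilde{\pi}$ is preserved by one step of the chain.

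\textbf{Transition kernel.} For $x \neq y$, the chain first draws a candidate $y$ from $p(\cdot)$ and then accepts it with probability $\hat{\alpha}(x,y)$ from \cref{eqn:acc_iw}. Its transition density off the diagonal is therefore
\begin{equation*}
K(x,y) = p(y)\,\min\left(1, \frac{w(y)}{w(x)}\right), \qquad w = \pi/q.
\end{equation*}
The remaining probability mass is an atom at $x$, corresponding to rejection.

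\textbf{Detailed balance off the diagonal.} I will show that $\tilde{\pi}(x)K(x,y)$ is symmetric in $(x,y)$. Writing $\tilde{\pi}(x) = \tilde{Z}^{-1} w(x)\,p(x)$, we get
\begin{equation*}
\tilde{\pi}(x)K(x,y) = \tilde{Z}^{-1}\, p(x)\,p(y)\, w(x)\min\left(1, \frac{w(y)}{w(x)}\right) = \tilde{Z}^{-1}\, p(x)\,p(y)\,\min\bigl(w(x), w(y)\bigr).
\end{equation*}
The right-hand side is manifestly symmetric in $x$ and $y$, so detailed balance holds for $x\neq y$.

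\textbf{Diagonal part and stationarity.} The rejection atom contributes a term proportional to $\delta_x(y)$. Its contribution to detailed balance is trivially symmetric, since it only pairs $x$ with itself. Integrating $\tilde{\pi}(x)P(x,dy)$ over $x$, with $P$ the full kernel (continuous part plus atom), and applying detailed balance then gives $\int \tilde{\pi}(x)P(x,dy)\,dx = \tilde{\pi}(y)\,dy$. This is exactly the statement that $\tilde{\pi}$ is stationary.

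\textbf{Main obstacle: well-definedness.} The real difficulty is not the algebra but making sure $\tilde{\pi}$ is a valid density. This requires $\tilde{Z} = \int \pi\,p/q\,dx < \infty$, together with $q > 0$ wherever $p\pi > 0$, so that $w$ is defined on the support of the chain. I would state these as standing assumptions.

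If uniqueness or convergence is also wanted, I would add two further assumptions: $\operatorname{supp}\tilde{\pi} \subseteq \operatorname{supp} p$, which gives irreducibility, and $\sup_x w(x) < \infty$. The second condition gives a minorization $K(x,\cdot) \ge (\tilde{Z}/\sup w)\,\tilde{\pi}(\cdot)$, by the same argument as the standard IMH uniform-ergodicity result. Since the proposition only claims that $\tilde{\pi}$ is admitted as a stationary distribution, I would prove the detailed balance part and remark on these extensions.

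Finally, I would check the degenerate case $q \equiv p$. There $\tilde{\pi} = \pi$, recovering standard IMH, and this is consistent with the tilted distribution in Equation~\ref{equ:bias}.
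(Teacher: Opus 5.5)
Your argument is correct and is essentially the paper's detailed-balance proof. The paper assumes a reversible $\mu$ and, in the case $w(y)\ge w(x)$, solves for $\mu(x)/\mu(y) = p(x)w(x)/(p(y)w(y))$; you instead plug in $\tilde\pi$ directly and observe that $\tilde\pi(x)K(x,y) = \tilde Z^{-1}p(x)p(y)\min\bigl(w(x),w(y)\bigr)$ is symmetric, which avoids the case split and argues in the direction the statement asserts ($\tilde\pi$ is stationary). Your explicit handling of the rejection atom, the step from detailed balance to stationarity, and the requirement $\tilde Z<\infty$ fills in points the paper leaves implicit. Note also that your extra condition $\operatorname{supp}\tilde\pi\subseteq\operatorname{supp}p$ holds automatically, since $\tilde\pi$ carries the factor $p$.
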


\begin{proof}
Let $\mu$ denote the stationary distribution we seek to characterize. By the \textit{detailed balance} relation we have

\begin{equation*}
    \mu(x) T(x \rightarrow y) = \mu(y) T(y \rightarrow x)
\end{equation*}

where the transition kernels satisfy

\begin{equation*}
    T(x \rightarrow y) = p(y) \hat{\alpha}(x, y), \qquad T(y \rightarrow x) = p(x) \hat{\alpha}(y, x)
\end{equation*}

where $\hat{\alpha}(y, x)$ is defined by equation \ref{eqn:acc_iw}.

Substituting,

\begin{align*}
    \mu(x) p(y) \hat{\alpha}(x, y) &= \mu(y) p(x) \hat{\alpha}(y, x)\\[0.5cm]
    \mu(x) p(y) \min\left(1,\, \frac{w(y)}{w(x)}\right) &= \mu(y) p(x) \min\left(1,\, \frac{w(x)}{w(y)}\right)\\
\end{align*}

Without loss of generality, assume that the $w(y) \geq w(x)$. Then

\begin{itemize}[leftmargin=2em, itemindent=2em]
    \item The left min evaluates to $\min\left(1,\, \frac{w(y)}{w(x)}\right) = 1$
    \item The right min evaluates to $\min\left(1,\, \frac{w(x)}{w(y)}\right) = \frac{w(x)}{w(y)}$
\end{itemize}

Subsequently,

\begin{align*}
    \mu(x) p(y) &= \mu(y) p(x) \frac{w(x)}{w(y)}\\[0.5cm]
    \frac{\mu(x)}{\mu(y)} &= \frac{p(x)}{p(y)} \cdot \frac{w(x)}{w(y)}\\
\end{align*}

By equality of the ratio $\mu(x) / \mu(y)$ , the stationary distribution must therefore be proportional to the quantity

\begin{equation*}
    \mu(x) \propto \pi(x) \frac{p(x)}{q(x)}
\end{equation*}

\end{proof}

\section{Architecture Details}
\label{app:arch}

\subsection{Input Representation}
\label{subsection: data featurization}
We encode molecular conformers as fully connected graphs whose node attributes reflect the atom types specified by alanine dipeptide's topology. Both the flow and EBM operate directly in Cartesian space, taking raw atomic coordinates as input.
\subsection{Flow Architecture: Geometric Vector Perceptrons}
\label{section: GVP}
The flow component of SITA is built on an E(3)-equivariant graph neural network employing geometric vector perceptrons \citet{satorras2021n} on molecular generation tasks~\citet{flowmol}.

Following~\citet{flowmol}, we incorporate architectural modifications shown to improve performance. Message passing follows the formulation of~\citet{jing2021equivariant}:
\begin{equation}
	\label{gvp_message_passing}
	(m^{(s)}_{i\to j},\,m^{(v)}_{i\to j})
	= \psi_{M}\Bigl([
		h^{(l)}_{i} :  d^{(l)}_{ij}],\;
	v_{i} : \left[\frac{x^{(l)}_{i} - x^{(l)}_{j}}{d^{(l)}_{ij}}
		\right]\Bigr)
\end{equation}

where $m^{(s)}_{i\to j}$ and $m^{(v)}_{i\to j}$ denote scalar and vector messages from node $i$ to $j$, $h_i$ collects scalar node feature, $d_{ij}$ encodes pairwise distances via a radial basis, and $x$ gives atomic positions. Full details on node and position updates appear in Appendix C of~\citet{flowmol}.

\subsection{EBM Architecture: Graphormer}
\label{section: graphormer}
For SITA's energy-based model, we employ the graphormer~\citet{ying2021transformers}, a transformer variant that has achieved strong results on molecular property prediction. The key departure from standard transformers lies in a structure-aware attention bias derived from graph topology. In the 3D setting, this bias emerges from the pairwise Euclidean distance matrix processed through an MLP.

At a high level, Graphormers interleave GNN-style operations within transformer blocks~\citet{grahformers}. Each attention head computes:
\begin{equation}
	\label{graphformer_head}
	\mathrm{head}
	= \mathrm{softmax}\!\Bigl(\frac{Q K^{\mathsf{T}}}{\sqrt{d}} + B\Bigr)\,V
\end{equation}
In the original formulation, $B$ is a learned bias matrix. Our implementation instead computes $B$ by passing the Euclidean distance matrix through an MLP, directly injecting geometric information into the attention mechanism.

\section{Training Setup}
\label{app:train_setup}

\subsection{Computational Resources}

All training experiments were ran on a single NVIDIA L40 GPU with 48 GB GDDR6 memory.

\subsection{Pre-training}

For both the flow and EBM, each was pre-trained for a total of 500 epochs with a batch size of 512. Each model was trained using separate Adam optimizers (\citet{adam_opt}) with learning rates of $1e-3$ for the flow and $5e-4$ for the EBM. Both models utilized separate reduce-on-plateau learning rate schedulers that monitored for loss convergence at a patience of 30 epochs and a reduction factor of $0.5$. An exponential moving average with a decay of 0.999 was used for each. The flow was trained on 500,000 conformations of alanine dipeptide simulated at 300K. In turn, the EBM was trained on 200,000 alanine dipeptide conformers generated by the flow model.

\subsection{Annealing Bootstrap}

Training setup for the annealing bootstrap is nearly identical to that of the pre-training phase. We highlight the training modifications during the annealing bootstrap in Table \ref{tab:boot-mod}.

\begin{table}[H]
    \caption{Bootstrap Modifications}
    \label{tab:boot-mod}
    \centering
    \begin{tabular}{lccc}
    \toprule
     Model & Epochs & Learning Rate \\
    \midrule
    EBM & 400 & $5e-4$ \\
    Flow & 200 & $5e-4$ \\
    \bottomrule
    \end{tabular}
\end{table}

\section{Metrics}
\label{app:metrics}

\paragraph{Effective Sample Size}
Reported ESS is calculated by

\begin{equation}
    \text{ESS}(\{w_i\}^{N}_{i}) = \frac{1}{\sum_i w^2_i}
\end{equation}

where weights $w_i$ are normalized.

\paragraph{Evaluation via optimal transport.} We measure sample quality through the lens of optimal transport. Let $\{x_i\}_{i=1}^n$ and $\{y_j\}_{j=1}^m$ denote samples from the generated and reference distributions, respectively. The 2-Wasserstein distance seeks the minimum-cost coupling $\pi$ between these point clouds:
\begin{equation}
    W_2(\mu, \nu) = \left( \min_{\pi \in \Pi(\mu,\nu)} \sum_{i=1}^{n} \sum_{j=1}^{m} \pi_{ij} \, c(x_i, y_j)^2 \right)^{1/2}
\end{equation}
where $\Pi(\mu,\nu)$ contains all joint distributions with the correct marginals. Optimal couplings are obtained via the POT library~\citet{flamary2021pot}. Below we describe two cost functions tailored to alanine dipeptide.

\paragraph{Energetic Cost.} Potential energy provides a global summary sensitive to both local bonded geometry and long-range interactions. We assess agreement in energy space by setting
\begin{equation}
    c_E(x,y)^2 = \bigl( E(x) - E(y) \bigr)^2
\end{equation}
yielding the metric $E$-$\mathcal{W}_2$.

\paragraph{Torsional Cost.} Ramachandran angles $(\phi, \psi)$ offer a compact descriptor of backbone geometry. For alanine dipeptide, the backbone geometry is fully characterized by two dihedral angles, $\phi$ and $\psi$. Since these angles live on a torus, we adopt a periodic cost:
\begin{equation}
    c_{\mathcal{T}}(x,y)^2 = \Bigl[ \bigl( \phi_x - \phi_y + \pi \bigr) \bmod 2\pi - \pi \Bigr]^2 + \Bigl[ \bigl( \psi_x - \psi_y + \pi \bigr) \bmod 2\pi - \pi \Bigr]^2
\end{equation}

\paragraph{Ramachandran KL divergence.} We discretize the $(\phi, \psi)$ torus into a uniform grid and estimate densities via 2D histograms over $[-\pi, \pi]^2$. The KL divergence between reference and generated distributions is then approximated as
\begin{equation}
    D_{\mathrm{KL}}(p \| q) \approx \sum_{i,j} p_{ij} \log \frac{p_{ij} + \epsilon}{q_{ij} + \epsilon} \cdot \Delta^2
\end{equation}
where $p_{ij}$ and $q_{ij}$ are the normalized histogram densities in bin $(i,j)$, $\epsilon$ is a small constant for numerical stability, and $\Delta = 2\pi / N_{\mathrm{bins}}$ is the bin width.

\section{TICA Evaluations}

\paragraph{Time-lagged Independent Component Analysis.} TICA identifies linear projections of multivariate time-series data along which the projected signal is maximally autocorrelated. In molecular dynamics, it is commonly used to recover the slow modes that separate metastable states. For a mean-centered trajectory $\tilde{x}_t \in \mathbb{R}^n$, consider a centered scalar observable $w^\top \tilde{x}_t$, formed by linearly combining the entries of a featurized trajectory $\tilde{x}_t \in \mathbb{R}^n$ through a weight vector $w$. Its autocorrelation at lag $\tau$ is given by the Rayleigh quotient

\begin{equation}
    \rho(w; \tau) \;=\; \frac{w^\top \hat{C}_{0\tau}\, w}{w^\top \hat{C}_{00}\, w},
\end{equation}
in which $\hat{C}_{00}$ and $\hat{C}_{0\tau}$ are the empirical instantaneous and lagged covariance matrices of the trajectory,
\begin{equation}
    \hat{C}_{00} \;=\; \frac{1}{T-\tau}\sum_{t=1}^{T-\tau} \tilde{x}_t \tilde{x}_t^\top,
    \qquad
    \hat{C}_{0\tau} \;=\; \frac{1}{T-\tau}\sum_{t=1}^{T-\tau} \tilde{x}_t \tilde{x}_{t+\tau}^\top.
\end{equation}
TICA selects the weight vectors that maximize $\rho(w; \tau)$. Stationarity of the Rayleigh quotient produces the generalized eigenvalue problem
\begin{equation}
    \hat{C}_{0\tau}\, w_j \;=\; \lambda_j\, \hat{C}_{00}\, w_j,
\end{equation}
whose eigenvalues $\lambda_j \in (0, 1]$ rank candidate directions by how slowly their projections lose correlation: values of $\lambda_j$ near unity correspond to dynamics that persist over many lag times, while values near zero indicate fast modes interchangeable with noise.

\paragraph{Wasserstein Metrics.} For each frame of both the reference MD trajectory and the model-generated samples, an identical featurization is applied. Heavy atoms (C, N, S) are first selected; from these we compute and concatenate (i) the upper-triangular pairwise distances and (ii) the $\sin$ and $\cos$ encodings of the backbone dihedral angles. Encoding the dihedrals via sine and cosine respects their angular periodicity in the feature representation.

A TICA model is then fit on the featurized MD trajectory at the specified lag time. Once fit, the TICA basis $\{w_j\}$ is applied to both the MD and model-sample features, yielding empirical distributions over the projections. We then compute the Wasserstein-$p$ distance between these empirical distributions by solving the discrete optimal transport problem
\begin{equation}
    \mathcal{W}_p(\hat{\mu}, \hat{\nu})^p \;=\; \min_{\Pi \in \Pi(\hat{\mu}, \hat{\nu})} \sum_{i,k} \Pi_{ik}\, \|\hat{x}_i - \hat{y}_k\|_2^p,
\end{equation}
where $\Pi(\hat{\mu}, \hat{\nu})$ is the set of couplings with the prescribed uniform marginals and the ground cost is Euclidean. Setting $p = 1$ yields $\mathcal{W}_1$ directly; for $\mathcal{W}_2$, the transport problem is solved with squared-Euclidean ground cost and the square root of the resulting optimum is returned.

\subsection{Sources of TICA Evaluation Error}
\label{app:tica}

In this section we discuss the use of time-lagged independent component analysis (TICA) \cite{noe_tica, tica_pande} for assessing generative model performance. In the application of generative modeling to MD data, it is common to quantify the discrepancy between TICA projections of model generated samples and MD simulation generated samples. In fact, the original PITA paper reports Wasserstein-1 and Wasserstein-2 metrics on such TICA projections. However, on inspection of PITA's publicly available code, we discovered two sources of error in their TICA evaluations: (i) choice of lagtimes and (ii) down-sampling trajectory frames.

\paragraph{Choice of lag time.} The sensitivity of TICA projections to the lag time $\tau$ stems directly from what the method is constructed to optimize: the projections that maximize the autocorrelation at \emph{exactly} the chosen separation, recovered as the leading eigenvectors of the generalized problem $\hat{C}_{0\tau}\, w_j \;=\; \lambda_j\, \hat{C}_{00}\, w_j$. Because $\hat{C}_{0\tau}$ encodes correlation structure at a single timescale rather than aggregating information across timescales, different choices of $\tau$ define genuinely different optimization problems whose top eigenvectors need not be related. At small $\tau$, the autocorrelation criterion rewards any motion that persists over short windows such as including high-frequency vibrations and side-chain rearrangements. Therefore the leading components mix biophysically meaningful fast processes (e.g.\ loop fluctuations or local hydrogen-bond dynamics that are genuinely correlated on picosecond scales) with thermal noise and integration artifacts that happen to be autocorrelated over a few frames. Larger $\tau$ filters these out by demanding persistence over longer windows, but at the cost of discarding fast processes that may be physically meaningful. Thus the choice of $\tau$ is fundamentally a choice about which timescale of dynamics one is willing to treat as signal. TICA metrics reported in \citet{akhound2025progressive} were computed with system-specific lag times: 10 for ATP and 100 for ADP. The small lag time chosen for ATP therefore demands a functional justification, which, to our knowledge, has not been established for ATP. We visualize TICA's sensitivity to $\tau$ for both alanine dipeptide and alanine tripeptide alongside different down-sampling techniques in Figure \ref{fig:tica_errors}.

\paragraph{Down-sampling.} Because MD trajectories are temporally correlated rather than i.i.d., the leading $N$ frames do not constitute a representative sample of the equilibrium distribution. Using them as a reference introduces systematic bias into model evaluation which can clearly be seen in Figure \ref{fig:tica_errors}. TICA metrics reported in \citet{akhound2025progressive} relied on projecting the first 10,000 frames of the MD trajectory of each molecular system studied in order to match the number of samples generated by the diffusion model at test-time. Selection of the first 10,000 frames results in the dropping of modes. Wasserstein metrics computed on biased TICA projections can therefore rank mode-collapsed models above more diverse alternatives: when the reference itself underrepresents certain modes, a model that also underrepresents them is not penalized for doing so. This pathology is apparent in \citet{akhound2025progressive}, where MD-Diff attains the best TICA-$\mathcal{W}_1$ and TICA-$\mathcal{W}_2$ values across the reported baselines despite clear mode collapse in their own Figure~5. In Table ~\ref{tab:tica-atp}, we report SITA's performance on TICA-$\mathcal{W}_1$ and TICA-$\mathcal{W}_2$ for alanine tripeptide using a lag time of 100 for completeness where we once again observe a stark reduction in both metrics when uniformly down-sampling.

\begin{figure}[!htbp]
    \centering
    \begin{subfigure}{\linewidth}
        \centering
        \includegraphics[width=\linewidth]{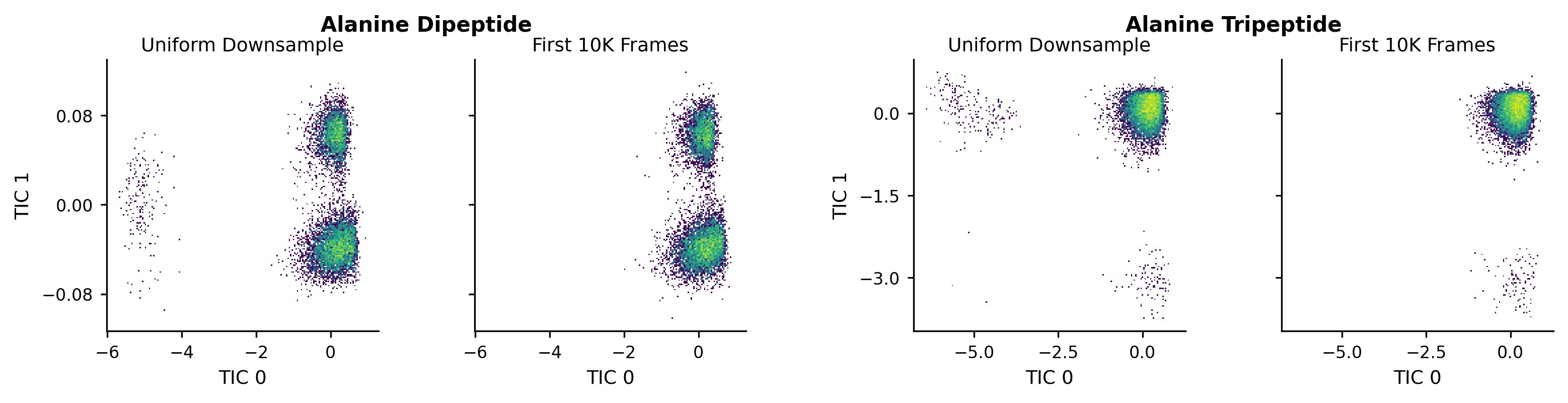}
    \end{subfigure}
    \\[1ex](a) TICA with $\tau = 10$.
    \begin{subfigure}{\linewidth}
        \centering
        \includegraphics[width=\linewidth]{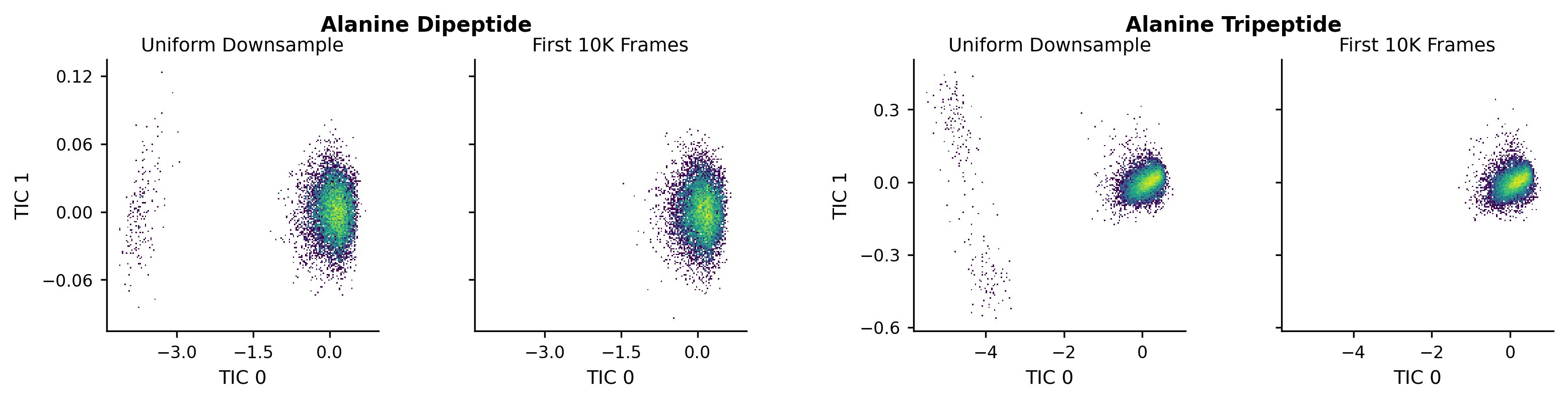}
    \end{subfigure}
    \\(b) TICA with $\tau = 100$.
    \caption{TICA down-sampling comparison at different lag times. All plots represent projections of MD derived samples.}
\label{fig:tica_errors}
\end{figure}

\begin{table}[!htbp]
\centering
\resizebox{\textwidth}{!}{%
\begin{tabular}{lcccc}
\toprule 
 & $\mathcal{W}_1$ & $\mathcal{W}_2$ & $\mathcal{W}_1^*$ & $\mathcal{W}_2^*$ \\
\midrule
SITA                  & $0.184 \pm 0.007$ & $0.723 \pm 0.012$ & $\mathbf{0.073 \pm 0.009}$ & $0.315 \pm 0.038$ \\
SITA-IS & --- & --- & $0.212 \pm 0.007$ & $0.439 \pm 0.025$ \\
SITA-IMH & --- & --- & $0.155 \pm 0.001$ & $\mathbf{0.219 \pm 0.016}$ \\
\bottomrule
\end{tabular}
}
\caption{TICA-$\mathcal{W}_1$ and TICA-$\mathcal{W}_2$ metrics for alanine tripeptide computed using a lag time of $\tau = 100$. Columns marked with $^*$ denote metrics computed using uniform frame sampling along the MD trajectory, while unmarked columns use the first-frame down-sampling method from the original PITA paper. All metrics are calculated over 10,000 samples across 3 seeds.}
\label{tab:tica-atp}
\end{table}

\section{Pseudocode}
\label{app:code}

\begin{algorithm}[H]
\caption{Independent Metropolis--Hastings Refinement with a Learned Surrogate}
\label{alg:imh}
\begin{algorithmic}[1]
\REQUIRE Target Boltzmann distribution $\pi(x)$ at temperature $T_K$ with energy function $E(\cdot)$
\REQUIRE Pre-trained flow $\hat{v}_\theta$, pre-trained EBM surrogate $q_\phi$
\REQUIRE Number of steps $K$, initialized chain $x_0 \sim \hat{\rho}_1^{T_K}$

\FOR{$i = 1$ to $K$}
    \STATE \textit{// Propose from the flow}
    \STATE Generate $y \sim \hat{\rho}_1^{T_K}$ by integrating generative ODE
    \STATE
    \STATE \textit{// Evaluate target and surrogate densities}
    \STATE Compute $\pi(x_{k-1}),\ \pi(y)$ via Boltzmann weights $\propto \exp(-E(\cdot)/k_B T_K)$
    \STATE Compute $q_\phi(x_{k-1}),\ q_\phi(y)$ from the learned surrogate
    \STATE
    \STATE \textit{// Acceptance probability}
    \STATE $\alpha(x_{k-1}, y) \leftarrow \min\!\left(1,\ \dfrac{\pi(y)\, q_\phi(x_{k-1})}{\pi(x_{k-1})\, q_\phi(y)}\right)$
    \STATE
    \STATE \textit{// Accept--reject}
    \STATE Draw $u \sim \mathcal{U}(0, 1)$
    \IF{$u \leq \alpha(x_{k-1}, y)$}
        \STATE $x_k \leftarrow y$
    \ELSE
        \STATE $x_k \leftarrow x_{k-1}$
    \ENDIF
\ENDFOR
\RETURN $x_K$
\end{algorithmic}
\end{algorithm}

\newpage
{
\small
\bibliography{sita}
}

\end{document}